
\documentclass[10pt,twocolumn,letterpaper]{article}

\usepackage{cvpr}              
\definecolor{cvprblue}{rgb}{0.21,0.49,0.74}
\usepackage[pagebackref,breaklinks,colorlinks,allcolors=cvprblue]{hyperref}

\usepackage{amsmath}
\usepackage{amssymb}
\usepackage{mathtools}
\usepackage{amsthm}


\usepackage{amsmath,amsfonts,bm}









\def\eqref#1{equation~\ref{#1}}









\def\1{\bm{1}}




\def\btheta{{\bm{\theta}}}

\def\bphi{{\bm{\phi}}}

\def\rvh{{\mathbf{h}}}

\def\rvv{{\mathbf{v}}}
\def\rvw{{\mathbf{w}}}
\def\rvx{{\mathbf{x}}}

\def\rvz{{\mathbf{z}}}


\def\rmA{{\mathbf{A}}}
\def\rmB{{\mathbf{B}}}

\def\rmI{{\mathbf{I}}}





\def\beps{{\bm{\epsilon}}}

\DeclareMathAlphabet{\mathsfit}{\encodingdefault}{\sfdefault}{m}{sl}
\SetMathAlphabet{\mathsfit}{bold}{\encodingdefault}{\sfdefault}{bx}{n}











\newcommand{\E}{\mathbb{E}}



\newcommand*\diff{\mathop{}\!\mathrm{d}}

\usepackage[dvipsnames]{xcolor}

\usepackage{pifont}
\usepackage[table]{xcolor}
\definecolor{OrangeBase}{RGB}{255,128,0}    
\colorlet{torange}{OrangeBase!20} 
\usepackage[capitalize,noabbrev]{cleveref}
\theoremstyle{plain}
\newtheorem{theorem}{Theorem}[section]
\newtheorem{proposition}[theorem]{Proposition}

\theoremstyle{definition}

\theoremstyle{remark}

\usepackage{comment}

\setlength{\fboxsep}{1pt} 
\newcommand{\eqnmarkbox}[2]{\colorbox{#1}{$\displaystyle #2$}}


\title{MeanFlow Transformers with Representation Autoencoders}

\author{
Zheyuan Hu\textsuperscript{1} \quad
Chieh-Hsin Lai\textsuperscript{1} \quad
Ge Wu\textsuperscript{3} \quad
Yuki Mitsufuji\textsuperscript{1,2} \quad
Stefano Ermon\textsuperscript{4} \\
\textsuperscript{1}Sony AI \quad
\textsuperscript{2}Sony Group Corporation \quad
\textsuperscript{3}Nankai University \quad
\textsuperscript{4}Stanford University \\
\href{mailto:zyhu2001@gmail.com}{\texttt{zyhu2001@gmail.com}} \quad
\href{mailto:chieh-hsin.lai@sony.com}{\texttt{chieh-hsin.lai@sony.com}}
}

\begin{document}
\maketitle

\begin{abstract}
MeanFlow (MF) is a diffusion-motivated generative model that enables efficient few-step generation by learning long jumps directly from noise to data. In practice, it is often used as a latent MF by leveraging the pre-trained Stable Diffusion variational autoencoder (SD-VAE) for high-dimensional data modeling. However, MF training remains computationally demanding and is often unstable. During inference, the SD-VAE decoder dominates the generation cost, and MF depends on complex guidance hyperparameters for class-conditional generation. In this work, we develop an efficient training and sampling scheme for MF in the latent space of a Representation Autoencoder (RAE), where a pre-trained vision encoder (e.g., DINO) provides semantically rich latents paired with a lightweight decoder. We observe that naive MF training in the RAE latent space suffers from severe gradient explosion. To stabilize and accelerate training, we adopt Consistency Mid-Training for trajectory-aware initialization and use a two-stage scheme: distillation from a pre-trained flow matching teacher to speed convergence and reduce variance, followed by an optional bootstrapping stage with a one-point velocity estimator to further reduce deviation from the oracle mean flow. This design removes the need for guidance, simplifies training configurations, and reduces computation in both training and sampling.  Empirically, our method achieves a 1-step FID of 2.03, outperforming vanilla MF’s 3.43, while reducing sampling GFLOPS by 38\% and total training cost by 83\% on ImageNet~256. We further scale our approach to ImageNet~512, achieving a competitive one-step FID of 3.23 with the lowest GFLOPS among all baselines.
Code is available at \url{https://github.com/sony/mf-rae}.
\end{abstract}

\begin{figure*}
\centering
\includegraphics[width=0.7\linewidth]{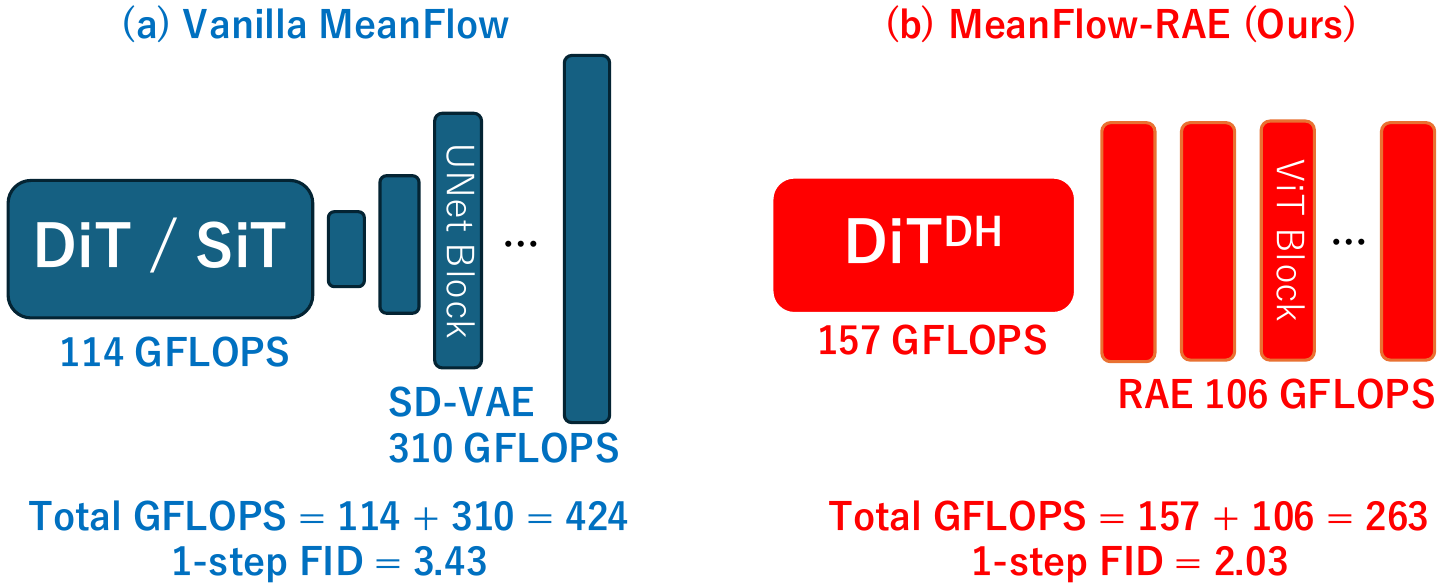}
\caption{\textbf{Overview of our method’s advantages.} On ImageNet~256, vanilla MF (a) employs the slow SD-VAE decoder, which accounts for 73\% of the total generation cost and thus bottlenecks the few-step generation speed. In contrast, our MF-RAE (b) leverages a higher-dimensional RAE latent space with semantically rich features and an efficient decoder. The DiT$^\text{DH}$ architecture is adopted to effectively process the high-dimensional latent space. As a result, while the converged 1-step FID of vanilla MF is 3.43 after more than 600 H100 GPU-days, our MF-RAE achieves a superior FID of 2.03 in only 100 H100 GPU-days. Additionally, our total generation cost is reduced by 38\% compared to vanilla MF in terms of GFLOPS, even with the same 1-step generation setting.}
\label{fig:method_overview}
\end{figure*}

\section{Introduction}

Diffusion models (or flow matching)~\citep{sohl2015deep,song2019generative,ho2020denoising,song2020score} 
have been shown to achieve high-fidelity sample generation. 
Its sampling can be interpreted as solving the associated probability flow ordinary differential equation (PF-ODE)~\citep{song2020score}. 
However, this procedure requires many neural network evaluations to approximate the numerical integration, 
which makes diffusion model generation notoriously slow.


Recent research~\citep{song2023consistency,kim2023ctm,geng2025mean,lai2025principlesdiffusionmodels}
has shifted toward \emph{flow map models}, with a promising representative called \emph{MeanFlow} (MF)~\citep{geng2025mean}. 
These models directly learn the PF-ODE solution map, transporting any initial state at one time to the corresponding state on the same trajectory at another time.
As a result, they enable generation in a few steps, mapping pure noise to clean data with only a small number of network evaluations. In practice, MF is often used in a latent space by leveraging a pre-trained Stable Diffusion variational autoencoder (SD-VAE)~\citep{rombach2022high} for high-dimensional image generation.
Despite these advances in diffusion-based few-step generative models, MF training and inference remain inefficient for such high-dimensional latent representations.

Training MF, even in a latent space, still requires hundreds of H100 GPU-days~\citep{meanflow_pytorch} on high-dimensional datasets such as ImageNet~256~\citep{deng2009imagenet}. 
It is further complicated by intricate classifier-free guidance (CFG)~\citep{ho2021classifier} configurations for class-conditional generation, which involve two CFG scale hyperparameters and two additional hyperparameters controlling the CFG triggering interval. 
These hyperparameters must be carefully tuned through extensive grid search to maximize MF performance, thereby increasing the overall complexity of training.
Moreover, the Jacobian vector product (JVP) required by the MF loss introduces an additional source of computational cost and instability.
Even when it is computed using the most efficient forward mode automatic differentiation~\citep{shi2024stde}, the JVP remains a significant bottleneck during training.
Supporting JVP in modern components such as Flash Attention~\citep{dao2022flashattention} also requires extra implementation effort~\citep{zheng2025large}, making MF cumbersome and time-consuming to adapt to new model architectures.

Regarding the inference, although MF enables few-step generation, the computational cost of the SD-VAE decoder that maps generated latent vectors back to pixel space dominates and substantially slows down the overall generation speed~\citep{kim2024pagoda}.
Specifically, as shown in \Cref{fig:method_overview} (a), for vanilla MF with a conventional DiT/SiT architecture combined with SD-VAE on ImageNet~256, the DiT/SiT requires 114 GFLOPS, whereas the SD-VAE decoder consumes 310 GFLOPS, which indicates that approximately 73\% of the total computation is spent on the decoder.

Recent research on the Representation Autoencoder (RAE)~\citep{zheng2025rae} 
replaces the conventional SD-VAE in latent diffusion models with a frozen pre-trained representation encoder (e.g., DINO~\citep{caron2021emerging}) and trains only a ViT-based~\citep{dosovitskiy2020image} decoder on top of its latent tokens.
Unlike the classic SD-VAE that uses a U-Net backbone to compress images into a low-dimensional latent space, 
RAE modernizes this design by adopting a transformer architecture and using semantically rich, high-dimensional representations as the generative space.
To model these higher-dimensional latents, RAE augments a DiT backbone with a wide yet lightweight DDT head~\citep{ddt} tailored for latent diffusion modeling, yielding an expressive and efficient DiT$^{\text{DH}}$ architecture.

For latent diffusion models, RAE brings limited improvement in sampling speed, since the main bottleneck is the large number of network evaluations required to solve the latent PF-ODE.
However, we emphasize that RAE’s decoder efficiency is particularly crucial for few-step models such as MF: its decoder requires about 106 GFLOPS, nearly a $3\times$ reduction compared to the $\sim$310 GFLOPS SD-VAE decoder used in vanilla MF.
This directly alleviates the decoder side bottleneck in current few-step generation and is a key motivation for our approach.

To this end, we improve the efficiency of MF training and sampling by learning
MF in the RAE latent space, and show that our approach is stable, fast, and high-quality in practice. We systematically analyze and decompose MF training into the following components.

First, we observe that naively training MF with the DiT$^{\text{DH}}$ architecture in the RAE latent space is unstable: gradients explode early in training, regardless of whether MF is initialized randomly or from a pre-trained flow matching teacher. We attribute this to a mismatch between the training signal of flow matching and the objective of MF: flow matching learns infinitesimal transitions along the PF-ODE trajectory, whereas MF must learn long jumps between distant time steps, and random initialization further aggravates this issue. To address this, we initialize MF with weights obtained from Consistency Mid-Training (CMT)~\citep{hu2025cmt}, which learns a trajectory-aware initialization by following the numerical PF-ODE trajectory of a pre-trained flow matching model.

Second, after stabilizing MF training with CMT, we adopt the MeanFlow Distillation (MFD) algorithm, which efficiently converts a pre-trained flow matching model (teacher) into a few-step MF model. We then introduce a novel optional bootstrapping stage that replaces the teacher with a one-point velocity estimator and performs a brief, low-cost fine-tuning phase. This bootstrapping stage becomes crucial for decoupling MF performance from the teacher’s quality, particularly when the teacher is suboptimal: we show theoretically and verify empirically that this two-stage procedure breaks the performance ceiling that the original teacher model would otherwise impose on MF.

To avoid ad-hoc tuning of guidance hyperparameters (e.g., the CFG scale and effective intervals, or the Auto-Guidance strength plus an additional guidance model~\citep{karras2024edm2}) in class-conditional generation, which otherwise makes MF highly sensitive to configuration, we distill a pre-trained class-conditional flow matching model in the RAE latent space into an MF model in the same space. This yields a class-conditional MF model that operates entirely without guidance parameters. Removing guidance therefore both simplifies the configuration and reduces the computational cost per iteration, since guided MF requires extra model evaluations, leading to slower convergence in practice.

Third, to further accelerate MF training, we replace the Jacobian--vector
product (JVP) term in the MF loss with a finite-difference approximation, which achieves similar empirical performance to exact JVP~\citep{wang2025transition}.

We refer to the resulting model, equipped with all these components, as \emph{MeanFlow-RAE} (MF--RAE). 
These changes stabilize optimization and significantly speed up convergence, as we will demonstrate empirically. 
Moreover, we empirically observe that MF-RAE can largely reuse the hyperparameters from flow matching pre-training, with only minor modifications. 
We attribute this robustness to the expressive RAE latent space in which MF-RAE operates. 
In contrast, vanilla MF with SD-VAE latents requires careful retuning.
Finally, sampling with MF--RAE is also accelerated thanks to the lightweight RAE decoder and the few-step generation nature of MF.

We validate our approach on ImageNet. 
At 256$\times$256 resolution (see \Cref{fig:method_overview} for an overview), 
we achieve an FID of 2.03 with a single sampling step using approximately 100 H100 GPU-days of training in total
(including flow matching pre-training, CMT mid-training, and MF post-training),
compared to vanilla MF, which attains an FID of 3.43 after more than 600 H100 GPU-days. 
Under the same single-step generation setting, our method also reduces the total GFLOPS by 38\%, thanks to the efficient RAE decoder. 
This delivers higher image quality and faster generation with substantially lower training cost.
We further scale up MF-RAE to ImageNet~512 and achieve a competitive 1-step FID score of 3.23 while maintaining the lowest sampling GFLOPS cost.

Overall, the MF-RAE framework advances few-step flow map models along three axes:
it reduces configuration complexity by removing guidance hyperparameters, 
stabilizes optimization via CMT-based initialization and MFD-based training targets, 
and accelerates both training and sampling while improving sample quality.
Because it is built on a generic RAE latent space with a DiT-based backbone,
MF-RAE remains compatible with future improvements in training algorithms and transformer-based architectures for flow map models,
thereby providing an extensible and general pipeline for efficient few-step generation.

\section{Preliminary}
\paragraph{Representation Autoencoder (RAE).} RAE~\citep{zheng2025rae} replaces the conventional SD-VAE of dimensionality compression with a pre-trained semantic representation encoder $E$, such as DINOv2~\citep{oquab2023dinov2} or SigLIP2~\citep{tschannen2025siglip}. The encoder $E$ is kept frozen, while a ViT-based \cite{dosovitskiy2020image} decoder $D$ is trained to achieve high-fidelity reconstruction by leveraging high-dimensional latent representations.
Specifically, given an input image $\rvx \in \mathbb{R}^{3\times H\times W}$, the frozen encoder $E$ extracts a semantic representation $\rvz_0:=E(\rvx)$, which is subsequently decoded by $D$ to reconstruct the pixel-level image $\hat{\rvx}:=D(\rvz_0)$.
The decoder is optimized using a composite objective $\mathcal{L}_{\operatorname{rec}}$ that combines $\operatorname{L_1}$, learned perceptual (LPIPS)~\citep{zhang2018unreasonable}, and adversarial  losses (GAN)~\citep{goodfellow2014generative}:
\begin{align*}
 \mathcal{L}_{\operatorname{rec}}(\rvx) = \omega_{\text{L}} \operatorname{LPIPS}(\hat{\rvx}, \rvx) + &\operatorname{L_1}(\hat{\rvx}, \rvx) + \omega_{\text{G}} \eta \operatorname{GAN}(\hat{\rvx}, \rvx),
\end{align*}
which defines a high-quality reconstruction objective, where $\omega_{\text{L}}$, $\omega_{\text{G}}$, and $\eta$ denote the weights of the respective loss terms. In addition, RAE extends DiT to DiT$^{\text{DH}}$ by incorporating a wide yet efficient DDT head~\citep{ddt}, enabling effective modeling of the high-dimensional latent space.

\paragraph{Flow Matching (FM) and Diffusion Model.} FM (or diffusion model)~\citep{lipman2022flow,ma2024sit} interpolates between clean data (or latent representations $\rvz_0=E(\rvx)$) $\rvz_0 \sim p_{\mathrm{data}}$ with noise $\beps\sim\mathcal{N}(\mathbf{0},\mathbf{I})$ via $\rvz_t=\alpha_t \rvz_0+\sigma_t\beps$, where a typical choice is $\alpha_t=1-t$, $\sigma_t=t$ for $t\in[0,1]$. Data generation from noise is achieved by learning a vector field $\rvv_{\bphi}(\rvz_t,t)$ to match the conditional velocity $\alpha_t' \rvz_0 + \sigma_t' \beps$ on average:
\[
\mathcal{L}_{\mathrm{FM}}(\btheta)
=\E_t\,\E_{\rvz_0,\beps}\!\left[w(t)\big\|\rvv_{\bphi}(\rvz_t,t)-\big(\alpha_t' \rvz_0 + \sigma_t' \beps\big)\big\|_2^2\right],
\]
where $w(t)$ is a time weighting function.
The optimum is achieved as $\rvv(\rvz_t,t)=\E[\alpha_t'\rvz_0+\sigma_t'\beps| \rvz_t]$.

Given pre-trained $\rvv_{\bphi}(\rvz_t,t)\approx \rvv(\rvz_t, t)$, generation is achieved via the integration of PF-ODE~\citep{song2020score}, $\frac{\mathrm{d}\rvz_t}{\mathrm{d}t}=\rvv(\rvz_t,t)$, starting from $\rvz_1\sim \mathcal{N}(\mathbf{0},\mathbf{I})$ down to $t=0$.
However, solving the PF-ODE requires tens or even hundreds of model inferences, since 
$\rvv$ captures only infinitesimal PF-ODE transitions. This makes FM's generation computationally expensive due to numerical discretization. MF addresses this limitation by directly learning long ODE jumps, enabling few-step generation.

\paragraph{MeanFlow (MF).} MF~\citep{geng2025mean} learns the average velocity integration for few-step generation, stemming from the idea of fitting arbitrary long PF-ODE jumps between $t, s$~\citep{kim2023ctm}:
\begin{align*}
\rvh_{\btheta}(\rvz_t, t, s)  \approx \rvh(\rvz_t, t, s) = \frac{1}{t-s}\int_s^t \rvv(\rvz_u,u)\diff u.
\end{align*}
Taking the derivative with $t$ to obtain the MF identity provides a tractable optimization target:
\begin{align}\label{eq:mf_loss}
\mathcal{L}_{\mathrm{MF}}(\btheta) := \mathbb{E}_{t > s} \mathbb{E}_{\rvz_t}\Big[\|\rvh_{\btheta}(\rvz_t,t,s) - \rvh_{\btheta^-}^{\mathrm{tgt}}(\rvz_t,t,s)\|_2^2\Big],
\end{align}
where the stop-grad regression target (with stop-gradient parameters $\btheta^-$) is defined as
\[
\rvh_{\btheta^-}^{\mathrm{tgt}}(\rvz_t,t,s) := \rvv(\rvz_t,t) - (t-s)\bigl(\rvv(\rvz_t,t)\partial_\rvx \rvh_{\btheta^-} + \partial_t \rvh_{\btheta^-}\bigr).
\]
The ground truth velocity $\rvv(\rvz_t,t)$ is either replaced by (1) one-point estimation $\alpha_t' \rvz_0 + \sigma_t' \beps$ where $\rvz_t=\alpha_t \rvz_0+\sigma_t\beps$ in MF Training (MFT) or (2) a teacher flow matching model in MF Distillation (MFD).

\section{Proposed Method: MF-RAE}

We revisit the MF training loss defined in \Cref{eq:mf_loss} in a more general form, which clarifies the MF training design and suggests a principled recipe for designing flow map models more broadly.

Let $\rvw$ be a vector, and consider the stop-gradient regression target defined as
\begin{align*}
\rvh_{\btheta^-}^{\mathrm{tgt}}(\rvz_t,t,s;\rvw)
:= \rvw - (t-s)\bigl(\eqnmarkbox{pink!30}{(\partial_\rvz \rvh_{\btheta^-})\rvw + \partial_t \rvh_{\btheta^-}}\bigr),
\end{align*}
where $\rvw$ is either chosen as
a one-point estimate of the conditional velocity
$\hat{\rvv}(\rvz_t, t) := \alpha_t' \rvz_0 + \sigma_t' \beps$ in MFT,
or as the output of a pre-trained flow matching model $\rvv_{\bphi}(\rvz_t, t)$ in MFD. Among $\rvh_{\btheta^-}^{\mathrm{tgt}}$,  the transport derivative  $(\partial_\rvz \rvh_{\btheta^-})\,\rvw + \partial_t \rvh_{\btheta^-}$ along $\rvw$
can be computed as a JVP of $\rvh_{\btheta^-}$ with respect to $(\rvz,t,s)$
in the direction $[\rvw,  1,  0]$, i.e.
\[
[\partial_\rvz \rvh_{\btheta^-},\,\partial_t \rvh_{\btheta^-},\,\partial_s \rvh_{\btheta^-}]^\top
[\rvw,  1,  0].
\]

This general target $\rvh_{\btheta^-}^{\mathrm{tgt}}(\rvz_t,t,s;\rvw)$
induces a generalized MF loss, which we denote by
\begin{align}\label{eq:general_mf_loss}
\mathcal{L}_{\mathrm{MF}}(\btheta;\rvw)
:= \mathbb{E}_{t > s}\,\mathbb{E}_{\rvz_t}\Big[
   \|\eqnmarkbox{blue!20}{\rvh_{\btheta}}(\eqnmarkbox{orange!20}{\rvz_t},t,s)
    - \rvh_{\btheta^-}^{\mathrm{tgt}}(\rvz_t,t,s;\eqnmarkbox{green!20}{\rvw})\|_2^2
\Big].
\end{align}
Building on this general formulation, we develop a systematic view of MF training and propose concrete improvements along four key axes:
\begin{itemize}
    \item \Cref{subsec:rae-latent}: a better latent space $\eqnmarkbox{orange!20}{\rvz_t}$ for MF modeling using RAE latents;
    \item \Cref{subsec:cmt-initial}: trajectory-aware initialization for the MF $\eqnmarkbox{blue!20}{\rvh_{\btheta}}$ via Consistency Mid-Training (CMT);
    \item \Cref{subsec:mfd-mft}: choice of the proxy velocity $\eqnmarkbox{green!20}{\rvw}$ in $\rvh_{\btheta^-}^{\mathrm{tgt}}(\cdot;\rvw)$ through a trade-off between MFD and MFT;
    \item \Cref{subsec:finite-difference}: efficient  computation of the transport derivative $\eqnmarkbox{pink!30}{(\partial_\rvz \rvh_{\btheta^-})\rvw + \partial_t \rvh_{\btheta^-}}$ via finite differences.
\end{itemize}

We refer to the resulting designs of MF with RAE-based latent modeling and our tailored training scheme as \emph{MeanFlow-RAE} (MF-RAE). In the following sections, we present the components of MF-RAE in detail.

\subsection{MF with DiT$^{\text{DH}}$ Architecture and RAE Latents}\label{subsec:rae-latent}
RAE, originally proposed for latent flow matching (diffusion) models, achieves strong generation quality even \emph{without} guidance by leveraging the expressive latent space of a pre-trained visual encoder. However, in standard latent diffusion models, RAE brings only limited gains in wall-clock sampling speed: although its ViT-based decoder is lighter in terms of GFLOPS, the dominant bottleneck remains the iterative PF-ODE solving of the latent diffusion model itself.

In contrast, RAE is particularly well-suited to MF and, more broadly, to few-step latent flow map models. Since MF evaluates the latent model in only one or two steps, the overall generation cost is no longer dominated by iterative dynamics. Thus, RAE's efficient decoder's acceleration is more pronounced. Moreover, the rich RAE latents can accelerate MF convergence. They also enable high-quality generation \emph{without} any guidance such as CFG or Auto-Guidance, thereby eliminating guidance-related hyperparameters and significantly simplifying MF training.

We propose the MF-RAE transformer architecture by extending the DiT$^{\text{DH}}$ backbone used in RAE with an additional time-embedding module for the time difference $t-s$. Specifically, we sum the embeddings of the class label, the current time $t$, and the time difference $t-s$, whereas the original DiT$^{\text{DH}}$ only sums the label and time-$t$ embeddings. This simple change allows the model to explicitly encode both the absolute time and the time difference, which is important for learning accurate flow maps in MF.

This modification is analogous to the change used in vanilla MF when adapting DiT/SiT backbones of the diffusion model for flow map learning, ensuring a fair comparison with the vanilla MF baseline. More broadly, any architectural changes developed to adapt DiT/SiT to MF can be incorporated in the same way into the DiT$^{\text{DH}}$ backbone. Thus, future advances in  architectures for MF can be plugged into our MF-RAE without altering the overall design, highlighting our generality and extensibility.

\subsection{Stabilizing MF via Consistency Mid-Training (CMT) Initialization}\label{subsec:cmt-initial}

Although DiT$^{\text{DH}}$ with RAE is a stronger and more efficient backbone than DiT/SiT with SD-VAE for latent modeling, we find that naively training MF $\rvh_\btheta$ on top of RAE latents is highly unstable: the gradients explode when the model $\rvh_\btheta$ is initialized either randomly or from a pre-trained teacher FM model. Empirically, optimization with the XL-sized model diverges almost immediately. At smaller scales (S and B), both random and diffusion-based initializations remain stable only in the very early phase; the loss then gradually increases and eventually blows up. The best 1-step FID observed before divergence is still above 20, which is far from convergence.

To mitigate this instability, we initialize $\rvh_{\btheta}$ using weights obtained from Consistency Mid-Training (CMT)~\citep{hu2025cmt}. Instead of starting the MF model $\rvh_\btheta$ from a pre-trained infinitesimal flow matching model (which learns only local jumps along the PF-ODE trajectory) or from an unstructured random initialization, we first run CMT to learn a trajectory-aware initialization from the numerical ODE trajectory of the teacher flow matching model:
\begin{align}\label{eq:cmt-mf}
\begin{aligned}
    \mathcal{L}_{\text{CMT-MF}}(\btheta)
=\mathbb{E}_{i>j}  \mathbb{E}_{\rvz_T\sim p_{\mathrm{prior}}}
 \Bigl[\bigl\|\rvh_{\btheta}(\hat\rvz_{t_i}, t_i, t_j) 
 - \tfrac{\hat\rvz_{t_i} - \hat\rvz_{t_j}}{t_i - t_j}\bigr\|_2^2\Bigr],
\end{aligned}
\end{align}
where $\{\hat{\rvz}_{t_i}\}$ is the teacher FM model's trajectory, obtained by integrating from a prior sample $\rvz_T \sim p_{\mathrm{prior}}$ and evaluating it at the discrete time grid $\{t_i\}$. In other words, CMT warm-up trains $\rvh_{\btheta}$ to reproduce a proxy of the long jumps required by MF by matching the corresponding long transitions along the teacher trajectory.

In the original CMT setup, a high-order multistep ODE solver (e.g., second-order Heun) is used to obtain accurate teacher trajectories within 16 NFEs.
In our RAE setting, a simple first-order Euler solver with 16 NFEs already
suffices: on ImageNet~256, the RAE diffusion achieves FID 1.51 with 50 steps and 2.32 with 16 steps, which is more than adequate for CMT's  teacher.

\subsection{Trade-Offs Between MFD and MFT}\label{subsec:mfd-mft}

With the formulation in \Cref{eq:general_mf_loss}, $\rvw$ is usually chosen either as
a point estimate of the conditional velocity
$\hat{\rvv}$ (MFT),
or as the output of a pre-trained diffusion teacher $\rvv_{\bphi}$ (MFD).
However, it remains unclear, in a principled sense, which choice of $\rvw$ is more beneficial for the training.

The following proposition clarifies this by characterizing how replacing the oracle MF
$\rvh(\rvz_t, t, s) = \tfrac{1}{t-s}\int_s^t \rvv(\rvz_u,u),\diff u$
with the proxy $\rvh_{\btheta^-}^{\mathrm{tgt}}(\rvz_t,t,s;\rvw)$, together with the choices $\rvw = \hat{\rvv}$  or $\rvw = \rvv_{\bphi}$,
makes the practical objective $\mathcal{L}_{\mathrm{MF}}(\btheta;\rvw)$ deviate from the oracle objective loss function.

\begin{proposition}
For any $\lambda \in [0,1]$, consider the combination of the one-point estimator
and the pre-trained velocity
\[
\rvw_\lambda := (1-\lambda)\hat{\rvv} + \lambda\rvv_{\bphi}.
\]
Plugging $\rvw_\lambda$ into the target
$\rvh^{\mathrm{tgt}}_{\btheta^-}(\rvz_t,t,s;\rvw)$ (with $\rvw = \rvw_\lambda$)
yields the corresponding loss $\mathcal{L}_{\mathrm{MF}}(\btheta;\rvw_\lambda)$. Consider the following three residuals: the one-point velocity residual, the teacher–oracle velocity residual, and the oracle bias.
\begin{align*}
\delta\hat{\rvv}_t &:= \hat{\rvv}(\rvz_t,t) - \rvv(\rvz_t,t),\\
\delta\rvv_t^\bphi &:= \rvv_{\bphi}(\rvz_t,t) - \rvv(\rvz_t,t),\\
\delta\rvh(\rvz_t,t,s) &:= \rvh_{\btheta^-}(\rvz_t,t,s) - \rvh(\rvz_t,t,s).
\end{align*}
Then the MF loss admits the following decomposition:
\begin{align}\label{eq:LMF-mix}
\begin{aligned}
    \mathcal{L}_{\mathrm{MF}}(\btheta;\rvw_\lambda)
    &= \mathbb{E}_{t,\rvz_t}
       \Big\|\rvh_{\btheta}
       - \big(\rvh + \rmB + \lambda\,\rmA_{\btheta^-}\,\delta\rvv_t^\bphi\big)\Big\|^2 \\
    &\qquad\qquad + (1-\lambda)^2\,\mathbb{E}\big\|\rmA_{\btheta^-}\,\delta\hat{\rvv}_t\big\|^2,
\end{aligned}
\end{align}
where $\rmB(\rvz_t,t,s)
:= (t-s)\Big(\partial_t \delta\rvh
  + \big(\nabla_{\rvx}\delta\rvh\big)\rvv\Big)$ and  $\rmA_{\btheta^-}(\rvz_t,t,s)
:= \rmI - (t-s)\,\nabla_{\rvx}\rvh_{\btheta^-}$.
\label{prop:bias-var}
\end{proposition}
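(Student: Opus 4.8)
The plan is to reduce the statement to two elementary ingredients, both already available: a purely algebraic rewriting of the stop-gradient target via the MeanFlow identity, and one orthogonality argument that peels off the one-point term because $\hat{\rvv}$ is a conditionally unbiased estimator of the oracle velocity $\rvv$. Throughout, I read the outer expectation $\mathbb{E}_{\rvz_t}$ in $\mathcal{L}_{\mathrm{MF}}(\btheta;\rvw_\lambda)$ as the MFT-style expectation over $(\rvz_0,\beps)$ with $\rvz_t=\alpha_t\rvz_0+\sigma_t\beps$, so that conditioning on $\rvz_t$ is meaningful; then $\rvh_{\btheta}$, $\rvh$, $\rvh_{\btheta^-}$, $\delta\rvh$, $\delta\rvv_t^{\bphi}$, and hence $\rmA_{\btheta^-}$ and $\rmB$ are deterministic functions of $(\rvz_t,t,s)$, while only $\hat{\rvv}(\rvz_t,t)=\alpha_t'\rvz_0+\sigma_t'\beps$ (hence $\delta\hat{\rvv}_t$) retains randomness once $\rvz_t$ is fixed.

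\emph{Step 1 (factor the target).} Since the transport derivative $(\partial_\rvz\rvh_{\btheta^-})\rvw+\partial_t\rvh_{\btheta^-}$ is affine in $\rvw$, the target collapses to $\rvh_{\btheta^-}^{\mathrm{tgt}}(\rvz_t,t,s;\rvw)=\rmA_{\btheta^-}\rvw-(t-s)\partial_t\rvh_{\btheta^-}$ with $\rmA_{\btheta^-}=\rmI-(t-s)\nabla_{\rvx}\rvh_{\btheta^-}$, exactly as in the statement. Writing $\hat{\rvv}=\rvv+\delta\hat{\rvv}_t$ and $\rvv_{\bphi}=\rvv+\delta\rvv_t^{\bphi}$ gives $\rvw_\lambda=\rvv+(1-\lambda)\delta\hat{\rvv}_t+\lambda\,\delta\rvv_t^{\bphi}$, and by linearity of $\rvw\mapsto\rmA_{\btheta^-}\rvw$ the target becomes $\big(\rmA_{\btheta^-}\rvv-(t-s)\partial_t\rvh_{\btheta^-}\big)+(1-\lambda)\rmA_{\btheta^-}\delta\hat{\rvv}_t+\lambda\,\rmA_{\btheta^-}\delta\rvv_t^{\bphi}$.

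\emph{Step 2 (apply the MeanFlow identity).} Substituting $\rvh_{\btheta^-}=\rvh+\delta\rvh$ into the bracketed term and splitting into an oracle part and a $\delta\rvh$ part, the MeanFlow identity $\rvh=\rvv-(t-s)\big(\partial_t\rvh+(\nabla_{\rvx}\rvh)\rvv\big)$ --- obtained by differentiating $\rvh(\rvz_t,t,s)=\tfrac{1}{t-s}\int_s^t\rvv(\rvz_u,u)\,\diff u$ in $t$ along the PF-ODE --- collapses the oracle part to $\rvh$, while the remaining first-order-in-$(t-s)$ terms in $\delta\rvh$ assemble into $\rmB=(t-s)\big(\partial_t\delta\rvh+(\nabla_{\rvx}\delta\rvh)\rvv\big)$ as in the statement. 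Hence $\rvh_{\btheta^-}^{\mathrm{tgt}}(\cdot;\rvw_\lambda)=\big(\rvh+\rmB+\lambda\,\rmA_{\btheta^-}\delta\rvv_t^{\bphi}\big)+(1-\lambda)\rmA_{\btheta^-}\delta\hat{\rvv}_t$.

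\emph{Step 3 (bias--variance split).} Set $M:=\rvh+\rmB+\lambda\,\rmA_{\btheta^-}\delta\rvv_t^{\bphi}$, a $\rvz_t$-measurable ``bias'' term, and $N:=\rmA_{\btheta^-}\delta\hat{\rvv}_t$, so the integrand of $\mathcal{L}_{\mathrm{MF}}(\btheta;\rvw_\lambda)$ is $\|\rvh_{\btheta}-M-(1-\lambda)N\|^2$. Expanding the square and conditioning on $\rvz_t$ first, the cross term is $-2(1-\lambda)\,\mathbb{E}\big[(\rvh_{\btheta}-M)^\top\rmA_{\btheta^-}\,\mathbb{E}[\delta\hat{\rvv}_t\mid\rvz_t]\big]$, which vanishes because $\mathbb{E}[\hat{\rvv}(\rvz_t,t)\mid\rvz_t]=\mathbb{E}[\alpha_t'\rvz_0+\sigma_t'\beps\mid\rvz_t]=\rvv(\rvz_t,t)$ --- the flow-matching optimality identity recalled in the preliminaries --- so that $\mathbb{E}[\delta\hat{\rvv}_t\mid\rvz_t]=0$, and $\rvh_{\btheta}-M$ and $\rmA_{\btheta^-}$ pull out of the conditional expectation. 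What remains is $\mathbb{E}\|\rvh_{\btheta}-M\|^2+(1-\lambda)^2\,\mathbb{E}\|\rmA_{\btheta^-}\delta\hat{\rvv}_t\|^2$, which is precisely \Cref{eq:LMF-mix}.

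\emph{Main obstacle.} The genuinely delicate step is the measurability bookkeeping in Step~3: one must justify carefully that $\rvh$, $\rmB$, $\rmA_{\btheta^-}$, $\delta\rvv_t^{\bphi}$, and $\rvh_{\btheta}$ depend on the sampling variables $(\rvz_0,\beps)$ only through $\rvz_t$, so that conditioning on $\rvz_t$ freezes them and leaves $\delta\hat{\rvv}_t$ as the only remaining randomness --- exactly what makes the cross term vanish and the decomposition a clean sum of two squared errors. A secondary, milder point is that Step~2 presupposes enough regularity of $\rvv$ (hence of the flow $u\mapsto\rvz_u$) to differentiate the defining integral of $\rvh$ along trajectories; this is the same regularity already implicit in the MeanFlow identity used in \Cref{eq:mf_loss}, so I would simply invoke it. Everything else --- factoring out $\rmA_{\btheta^-}$, the substitution $\rvh_{\btheta^-}=\rvh+\delta\rvh$, and completing the square --- is routine.
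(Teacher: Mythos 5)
Your proof takes essentially the same route as the paper's: linearize the stop-gradient target $\rvh_{\btheta^-}^{\mathrm{tgt}}(\cdot;\rvw)$ in the proxy velocity $\rvw$ so that $\rmA_{\btheta^-}$ factors out, invoke the MeanFlow identity $\rvh = \rvv - (t-s)\big((\nabla_{\rvx}\rvh)\rvv + \partial_t\rvh\big)$ to collapse the oracle part and collect the $\delta\rvh$-residual into $\rmB$, substitute $\rvw_\lambda = \rvv + (1-\lambda)\delta\hat{\rvv}_t + \lambda\delta\rvv_t^{\bphi}$, and kill the cross term via $\mathbb{E}[\delta\hat{\rvv}_t\mid\rvz_t]=0$, exactly as in the appendix. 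The measurability bookkeeping you flag in Step~3 is handled the same way there: once $(t,\rvz_t)$ is fixed, $\rvh_{\btheta}$, $\rvh$, $\rmB$, $\rmA_{\btheta^-}$, and $\delta\rvv_t^{\bphi}$ are deterministic and only the internal noise of $\hat{\rvv}$ remains, so $\mathbf{Y}$ pulls out of the conditional expectation.

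One small caveat worth noting (it affects both your write-up and the paper's identically): you assert that the leftover $\delta\rvh$ terms ``assemble into $\rmB$,'' but carrying out the substitution gives
\[
\rvh_{\btheta^-}^{\mathrm{tgt}}(\rvz_t,t,s;\rvv)
= \rvh - (t-s)\big(\partial_t\delta\rvh + (\nabla_{\rvx}\delta\rvh)\rvv\big)
= \rvh - \rmB ,
\]
i.e.\ the opposite sign from the one stated. This does not change the structure of the decomposition or any of the qualitative conclusions (the bias term still vanishes as $\delta\rvh\to 0$ and as $\delta\rvv_t^{\bphi}\to 0$), but for the algebra to close one should either flip the sign on $\rmB$ in the final display or define $\rmB$ with an overall minus.
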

The proof is provided in \Cref{app:theory-proof}. When $\lambda = 0$, we have $\rvw_0 = \hat{\rvv}$, and \Cref{eq:LMF-mix} reduces to
\[
\mathcal{L}_{\mathrm{MF}}(\btheta;\rvw_0)
= \mathbb{E}_{t,\rvz_t}\big\|\rvh_{\btheta} - (\rvh + \rmB)\big\|^2
  + \mathbb{E}\big\|\rmA_{\btheta^-}\,\delta\hat{\rvv}_t\big\|^2,
\]
which corresponds to MF trained purely from the one-point velocity estimator (MFT). In this case, the loss includes the full variance induced by the noisy one point estimate $\hat{\rvv}$ through the second term. When $\lambda = 1$, we have $\rvw_1 = \rvv_{\bphi}$, and the variance term disappears:
\[
\mathcal{L}_{\mathrm{MF}}(\btheta;\rvw_1)
= \mathbb{E}_{t>s,\rvz_t}\big\|\rvh_{\btheta} - (\rvh + \rmB + \rmA_{\btheta^-}\,\delta\rvv_t^\bphi)\big\|^2.
\]
This corresponds to the pure distillation regime (MFD), where the objective depends only on the teacher velocity residual $\delta\rvv_t^\bphi$ and the oracle bias $\rmB$.

Therefore, when the teacher model is of sufficiently high quality, such as a strong RAE flow matching model (i.e., $\delta\rvv_t^\bphi \approx \bm{0}$), MFD yields both smaller bias and lower variance, leading to a faster convergence rate.  In practice, however, MFD is still limited by the quality of the teacher, since it inevitably inherits some bias whenever $\delta\rvv_t^\bphi \neq \bm{0}$.
To further reduce this residual bias, we may optionally apply MFT after MFD has converged.
Although MFT typically has higher variance, this variance is effectively reduced when starting from a well converged MFD model, allowing us to benefit from the smaller bias characteristic of MFT. When the teacher model is sufficiently strong, the MF model obtained by MFD alone already achieves good performance, and the additional MFT stage becomes unnecessary.

To the best of our knowledge, we are the first to provide a clear theoretical analysis of the trade-off between MFD and MFT. In summary, our theoretical results suggest a practical bias–variance control procedure that first applies MFD with a pre-trained flow matching model, followed by an optional MFT stage using a one-point estimate.

\subsection{Finite Difference for Generality}\label{subsec:finite-difference}
The main computational and stability bottleneck of MF is the JVP required in the transport derivative $\tfrac{\diff}{\diff t} \rvh_{\btheta^-}(\rvz_t, t, s)= (\partial_\rvz \rvh_{\btheta^-})\rvw + \partial_t \rvh_{\btheta^-}$  in its regression target.
A natural way to avoid explicit JVPs is to approximate the time derivative with a finite difference~\citep{wang2025transition}.
Given a step size $\Delta t$, we write
\begin{align*}
    \tfrac{\diff}{\diff t}\rvh_{\btheta}(\rvz_{t}, t, s)
    &\approx 
    \tfrac{\rvh_{\btheta}(\rvz_{t + \Delta t}, t + \Delta t, s)
          - \rvh_{\btheta}(\rvz_{t - \Delta t}, t - \Delta t, s)}{2\Delta t},
\end{align*}
where
\[
\rvz_{t \pm \Delta t} \approx \rvz_{t} \pm \Delta t \rvw(\rvz_t,t)
\]
is obtained by a first-order Euler step along the teacher velocity field $\rvw = \rvv_\bphi$.

Empirically, choosing $\Delta t \in [0.001, 0.01]$ yields stable training, and values in this range lead to similar convergence behavior and performance comparable to using exact JVPs (i.e., the limit $\Delta t \to 0$), indicating that the discretization error is negligible in practice~\citep{wang2025transition}.
Hence, we simply fix the middle value $\Delta t = 0.005$ throughout our experiments.
%

\subsection{Summary of MF-RAE Pipeline}\label{subsec:summary-alg}
In summary, we decompose the challenging MF-RAE training into three more manageable stages, adopting a divide-and-conquer approach. Each stage plays a crucial role in facilitating and stabilizing the subsequent stage.
\begin{enumerate}
\item[(1)] \textbf{Pre-training:} Train a high-quality flow matching teacher in the RAE latent space.
\item[(2)] \textbf{Mid-training:} Apply CMT to learn a trajectory-aware MF initialization (using the proposed DiT$^{\text{DH}}$ architecture), where the pre-trained teacher generates the reference trajectories and serves as CMT's initialization.
\item[(3)] \textbf{Post-training:} Starting from the CMT weights, train the MF model in the RAE latents using the MFD with finite differences; optionally, apply MFT afterward to further reduce loss bias and improve model quality.
\end{enumerate}

\section{Related Work}
\subsection{Diffusion Models and Flow Matching}
Diffusion models and flow matching both aim to learn a time-dependent velocity field that gradually transforms a simple Gaussian prior distribution  into the target data distribution. In diffusion models, this transformation can be described  by an deterministic PF-ODE, allowing for flexible generative sampling via the numerical simulation of ODEs~\citep{song2020score,lipman2022flow,rectified_flow}.

In high-dimensional image synthesis, Diffusion Transformers (DiT) in the
SD-VAE latent space pioneered scaling diffusion models to large,
high-fidelity tasks~\citep{rombach2022high,peebles2023dit}, enabling efficient training and inference while preserving semantic and visual quality. SiT~\citep{ma2024sit} extends DiT with flow matching interpolants for more flexible distribution transport. Subsequent work leverages semantic representations to improve reconstruction and generation~\citep{kouzelis2025boosting,chen2025masked,huang2025ming}, including REPA~\citep{yu2025repa}, which aligns SiT features with pre-trained encoders to speed up training, and REG~\citep{wu2025reg}, which further injects a pre-trained encoder’s class token in denoising to capture image-label pair information better.
RAE~\citep{zheng2025rae} further treats a discriminative encoder as a tokenizer to enable semantic-space reconstruction and generation while alleviating issues of high-dimensional latent spaces. However, these models still struggle to achieve high fidelity under few-step
sampling.

\subsection{Few-Step Flow Map Models}
Diffusion models and flow matching suffer from slow sampling due to the recursive model inferences required during ODE solving after discretizing numerous time steps. Flow map models, such as the consistency model (CM)~\citep{song2023cm}, consistency trajectory model (CTM)~\citep{kim2023ctm}, and MF~\citep{geng2025mean}, directly learn the solution map of a deterministic PF-ODE, thereby enabling fast few-step sampling. Specifically, CM learns to map any noisy point along the trajectory directly to its corresponding clean point on the same ODE trajectory. CTM extends CM by learning mappings between arbitrary points along the ODE trajectory. MF shares an mathematically equivalent parameterization with CTM but instead learns the average ODE integration between two points.

\section{Experiments}

\paragraph{Dataset and Setup.} We primarily evaluate sample quality using FID~\citep{heusel2017fid} on class-conditional ImageNet~256 and 512 benchmarks. To quantify training cost, we train all models on H100 GPUs and report the total training GPU time. To assess generation efficiency, we additionally report the compute cost in GFLOPS per generated sample.
Given a pre-trained RAE encoder–decoder pair, the MF-RAE training pipeline consists of three stages as shown in \Cref{subsec:summary-alg}. We defer the detailed configurations of the flow matching pre-training and CMT mid-training stages to Appendix \ref{appendix:hyperparams}. Below, we describe the simplified hyperparameter setup used for training the MF model in MF-RAE.

\paragraph{Hyperparameter Simplicity of MF-RAE.} 
We train MF-RAE with (almost) the same hyperparameters as the DiT$^{\text{DH}}$ flow matching stage, changing only a few scalars: we reduce the batch size from 1024 to 256/128 for ImageNet~256/512, lower the learning rate from $2\times 10^{-4}$ to $1\times 10^{-4}$ (and keep it fixed for both resolutions), and adjust the EMA rate from $0.9995$ to $0.9999/0.9995$ for ImageNet~256/512. The smaller batch sizes are purely for efficiency, enabled by the stable CMT initialization, while the learning rate and EMA are aligned with the original MF settings~\citep{geng2025mean} to isolate architectural and algorithmic effects. In practice, this means one can almost directly reuse the flow matching configuration and obtain a few-step MF-RAE generator with only minimal tweaks.

By contrast, vanilla MF requires substantial hyperparameter redesign relative to its flow matching teacher. The corresponding SiT + SD-VAE model~\citep{ma2024sit} is trained with uniform time sampling, whereas vanilla MF switches to a carefully tuned log-normal time distribution and further depends on delicate choices of CFG weights, CFG time intervals, and an additional CFG mixing scale $\kappa$ (in their notation) to make the method work well. MF-RAE needs none of these bespoke techniques: we keep the teacher’s uniform time sampling and use no guidance for class-conditional generation, yet still obtain fast, stable convergence. This highlights that MF-RAE is significantly more hyperparameter-robust and easier to deploy than vanilla MF.

\subsection{ImageNet~256 Main Results (\Cref{tab:imagenet-256})}
The sample quality results of MF-RAE compared with various baseline models are presented in \Cref{tab:imagenet-256}. Our MF-RAE achieves state-of-the-art (SOTA) 1-step and 2-step generation quality among all few-step flow map models. Furthermore, these strong results are obtained with both lower generation and training costs, as explained below.
\begin{table}[htbp]
\centering
\caption{\small{Sample quality on class-conditional ImageNet~256.}}
\label{tab:imagenet-256}
\fontsize{8.9pt}{9.9pt}\selectfont
\begin{tabular}{l@{\hskip 6pt}c@{\hskip 4pt}c@{\hskip 4pt}c}
\toprule
\textbf{METHOD} & \textbf{NFE} ($\downarrow$) & \textbf{FID} ($\downarrow$)  & \textbf{\#Params} \\
\midrule
\multicolumn{3}{l}{\textbf{Diffusion Models \& Flow Matching} (*no guidance)} \\
\midrule
ADM-G~\citep{diffusion_beat_gan} & 250$\times$2 & 3.94 & 554M \\
DiT-XL/2~\citep{peebles2023dit} & 250$\times$2 & 2.27  & 675M \\
SiT-XL/2~\citep{ma2024sit} & 250$\times$2 & 2.06  & 675M \\
REPA~\citep{yu2025repa} & 250$\times$2 & 1.29  & 675M \\
REG~\citep{wu2025reg} & 250$\times$2 & 1.36  & 675M \\
RAE$^*$~\citep{zheng2025rae} & 50 & 1.51  & 839M \\
RAE~\citep{zheng2025rae} & 50$\times$2 & \textbf{1.13}  & 839M \\
\midrule
\multicolumn{3}{l}{\textbf{GANs \& Masked Models}} \\ 
\midrule
BigGAN~\citep{brock2018biggan} & 1 & 6.95 & 112M \\
StyleGAN~\citep{sauer2022stylegan} & 1 & 2.30 & 166M \\
MAR~\citep{li2024MAR} & 256$\times$2 & \textbf{1.55}  & 943M \\
VAR-$d$30~\citep{tian2024VAR} & 10$\times$2 & 1.92  & 2B \\
\midrule
\multicolumn{3}{l}{\textbf{Flow Map Models}} \\
\midrule 
iCT~\citep{song2023ict} & 1 / 2 & 34.24 / 20.30 & 675M  \\
IMM~\citep{zhou2025inductive}& 2 & 7.77 & 675M  \\
Shortcut~\citep{frans2025shortcut}& 1 & 10.60 & 675M  \\
MeanFlow~\citep{geng2025mean} & 1 / 2 & 3.43 / 2.20 & 676M  \\
CMT w/ MF~\citep{hu2025cmt} & 1 & 3.34 & 676M  \\
AlphaFlow~\citep{zhang2025alphaflow} & 1 / 2 & 2.58 / 1.95 & 675M  \\
 \rowcolor{gray!20}
MF-RAE~(Ours) & 1 / 2 & \textbf{2.03} / \textbf{1.89} & 841M  \\
\bottomrule
\end{tabular}
\end{table}

\paragraph{Faster Generation of MF-RAE over Baselines.} The total generation cost, measured in GFLOPS, is the decoder cost plus (NFE) times the diffusion transformer cost. 
In the ImageNet~256 experiments, our method achieves lower GFLOPS and thus faster generation speed for both 1-step and 2-step generation.
For the 1-step case, vanilla DiT-based MF requires $310+114=424$ GFLOPS, whereas our approach uses only $106+157=263$ GFLOPS.
For the 2-step case, vanilla DiT-based MF costs $310+114 \times 2=538$ GFLOPS, while ours requires just $106+157 \times 2=420$ GFLOPS.
Despite having the same NFE, MF-RAE enables faster generation, as illustrated in \Cref{fig:method_overview}.

\paragraph{Faster Convergence of MF-RAE over Baselines.} 
We compare the total convergence time of vanilla MF and our MF-RAE on ImageNet~256.
Vanilla MF is trained from scratch using the MFT (i.e., without a pre-trained teacher), requiring 1400 epochs, corresponding to about 7M iterations, with a total training cost of over 600 H100 GPU-days.

By contrast, MF-RAE proceeds in three stages: flow matching pre-training for 800 epochs / 1M iterations (78 H100 GPU-days), CMT mid-training for 27K iterations (2.1 H100 GPU-days), and MFD post-training for 36 epochs / 180K iterations (21 H100 GPU-days).
In total, MF-RAE requires only about 100 H100 GPU-days, representing more than a $6\times$ reduction in training cost compared to vanilla MF, while achieving faster convergence.

Moreover, once a flow matching teacher has been pre-trained or is available off the shelf, the additional cost of converting it into a few-step MF model via CMT mid-training and MF distillation is only 23 H100 GPU-days.
This shows that MF-RAE provides an efficient and practical route to distill a strong flow matching model into a fast few-step generator.

\paragraph{Ablation with Latent Representations and MFT versus MFD.}
\begin{table}[htbp!]
\vspace{-0.2cm}
\centering
\caption{Ablation on latent representation and training scheme for MF on ImageNet~256. Compared to MF trained on SD-VAE latents with DiT/SiT, our MF-RAE configuration
(RAE with DiT$^{\text{DH}}$ via MFD; last row) converges faster and achieves the best 1- and 2-step FIDs while requiring no guidance.
In contrast, MF trained on SD-VAE latents exhibits severe performance degradation when guidance is removed.
On a fixed latent space, MFD also outperforms MFT, showing that both the RAE representation and the MFD objective are key to fast and stable MF convergence.}
\label{tab:mf_original}
\fontsize{7.8pt}{8.8pt}\selectfont
\begin{tabular}{c@{\hskip 12pt}c@{\hskip 8pt}c@{\hskip 6pt}c@{\hskip 6pt}c}
\toprule
\textbf{Algorithm} & \textbf{Guided?}& \textbf{Architecture}  & \textbf{NFE} ($\downarrow$) & \textbf{FID} ($\downarrow$)  \\
\midrule
MFT  & $\checkmark$  & SD-VAE with DiT/SiT    & 1 / 2 & 3.38 / 2.20 \\ 
MFD  & $\checkmark$  & SD-VAE with DiT/SiT    & 1 / 2 & 3.15 / 1.95 \\ 
MFD  & $\times$  & SD-VAE with DiT/SiT    & 1 / 2 & 5.94 / 4.01 \\ 
MFT  & $\times$  & RAE with DiT$^\text{DH}$   & 1 / 2 & 2.81 / 2.56 \\ 
 \rowcolor{gray!20}
MFD  & $\times$   & RAE with DiT$^\text{DH}$      & 1 / 2 &   \textbf{2.03} / \textbf{1.89}    \\
\bottomrule
\end{tabular}
\end{table}

We empirically analyze how the choice of latent representation 
(SD-VAE paired with DiT/SiT vs. RAE paired with DiT$^{\text{DH}}$) 
and the training scheme (MFT vs. MFD) 
affect performance on ImageNet~256, with results summarized in \Cref{tab:mf_original}. The first row in \Cref{tab:mf_original} indicates the vanilla MF, while the last row is our MF-RAE.

For the SD-VAE setting, we use the vanilla MF-XL/2 configuration and a recent REG-based SiT teacher~\citep{wu2025reg}. 
This teacher attains FID $1.36$ with CFG and $1.80$ unguided, comparable to the RAE-space teacher (FID $1.51$ unguided). 
To ensure a fair comparison, we keep batch size, learning rate, EMA, CMT mid-training iterations, and optimizer identical across settings.

Comparing MF trained on SD-VAE latents (via MFD or MFT) with our MF-RAE
configuration (DiT$^{\text{DH}}$ + RAE via MFD), MF-RAE converges faster and
achieves the best FID among methods with similar teacher quality.
Moreover, we observe that MF on SD-VAE latents performs poorly without guidance, even with distillation, whereas MF-RAE attains high-quality unguided class-conditional generation. This shows that the RAE latents  is crucial for simplifying MF training and eliminating any guidance hyperparameters.

Interestingly, vanilla MF (trained on SD-VAE latents) can be trained from scratch with random initialization, but requires about $1400$ epochs (600+ H100 GPU-days) to converge. In contrast, MF on the semantic RAE latent space cannot be trained from scratch, either with random initialization or with a pre-trained diffusion teacher as initialization, unless we use CMT initialization. This indicates that the effectiveness of MF is strongly dependent on the choice of representation and architecture, and that our MF-RAE with CMT initialization provides a general stability mechanism that enables future model extensions.

Finally, we directly compare MFD and MFT on the same RAE latent representation (the last two rows in \Cref{tab:mf_original}).
In this setting, MFD with a pre-trained teacher achieves 1- and 2-step FIDs
of $2.03$ and $1.89$, while MFT with a one-point velocity reaches only $2.81$ and $2.56$, showing that MFD is substantially more effective.
Given a well-trained flow matching teacher, distillation supplies low-variance training and high-quality velocity targets (see Proposition~\ref{prop:bias-var}), so MF-RAE converges faster and to better
performance. Since the teacher-student FID gap is already small (the teacher with $50$ NFEs has FID $1.51$), an additional bootstrapping stage is unnecessary in this regime.

\subsection{Scale up to ImageNet~512 (\Cref{tab:imagenet-512})}

\begin{table}[htbp]
\centering
\caption{\small{Sample quality on class-conditional ImageNet~512. We report the sampling GFLOPS containing decoder and diffusion transformer costs.
Methods with * require additional complicated guidance hyperparameters obtained from extensive grid searches.}}
\label{tab:imagenet-512}
\fontsize{8.9pt}{9.9pt}\selectfont
\begin{tabular}{l@{\hskip 5pt}c@{\hskip 6pt}c@{\hskip 6pt}c@{\hskip 6pt}c}
\toprule
\textbf{METHOD} & \textbf{NFE} & \textbf{FID} ($\downarrow$)  &  \textbf{GFLOPS} ($\downarrow$) &\textbf{\#Params} \\
\midrule
\multicolumn{4}{l}{\textbf{Flow Map Models}} \\
\midrule 
CMT w/ ECD*~\citep{hu2025cmt} & 1 & 3.38 & 2344  & 1.5B \\
sCD*~\citep{lu2025sCM} & 1 & 2.28 & 2344  & 1.5B \\
sCT~\citep{lu2025sCM} & 1 & 4.29 & 2344  & 1.5B \\
AYF*~\citep{sabour2025align}& 1 & 3.32 & 1342  & 280M \\
 \rowcolor{gray!20}
MF-RAE~(Ours) & 1 & 3.23 & 1051  & 841M  \\
\bottomrule
\end{tabular}
\end{table}
We scale up our MF-RAE to ImageNet~512; to the best of our knowledge, this is among the first extensions of MF to this resolution. The sample quality (FID) and generation cost (GFLOPS) are summarized in \Cref{tab:imagenet-512}. Our method attains a competitive 1-step FID while achieving the lowest generation cost, owing to the efficient decoder. These near-SOTA results are obtained without any guidance: CMT and AYF distills from Auto-Guidance teachers, sCD distills from a CFG teacher, while sCT (which relies solely on one-point velocity estimation without a guided teacher) performs worse than ours. This highlights the simplicity and effectiveness of our approach and suggests that MF-RAE could be further improved by incorporating guidance techniques. Moreover, our results are obtained with substantially shorter training time than SOTA methods such as sCD and CMT.
More specifically, we perform MFD for 20K iterations, followed by MFT for an additional 10K iterations using the MFD checkpoint as initialization. The CMT stage requires approximately 8 H100 GPU days, while the combined MFD+MFT stage takes about 9 H100 GPU days. The total cost of 17 H100 GPU days is comparable to that of CMT with ECD (17 days) and significantly lower than sCD’s 233 days and sCT’s 98 days.
\paragraph{Ablation with Initializations Scheme and Bootstrapping Strategy.}
We empirically validate the proposed bootstrapping strategy, namely the combined MFD+MFT training scheme that replaces the pre-trained teacher with a one-point velocity objective for further fine-tuning.
We consider three initialization methods (random, flow matching, CMT) and three training algorithms (MFT only, MFD only, and the bootstrapped MFD+MFT combination), and train each configuration for 30K optimization iterations.

For initialization, both random and flow matching initializations lead to gradient explosions at the beginning of training, whereas only CMT yields stable optimization. 
This confirms the effectiveness of CMT as a trajectory-aware initialization for MF.

Among training schemes, MFT alone performs worst (one-step FID 5.82), consistent with its high gradient variance.
MFD alone converges faster due to lower variance, reaching near-convergence by 20K iterations (one-step FID 3.95).
Starting from that point, switching to the one-point MFT objective for an additional 10K iterations further reduces bias and attains one-step FID 3.23.
Thus, MFD is well-suited for the early stage to accelerate convergence, while a brief MFT phase refines the model, aligning with Proposition~\ref{prop:bias-var}.

\section{Conclusion}
Training MF in the RAE latent space is challenging.
By combining CMT for stable initialization, MFD for accelerated convergence, and an optional lightweight bootstrapping stage for further refinement, our approach
substantially reduces training cost, simplifies configuration by removing
guidance, and enables faster few-step generation while preserving
state-of-the-art performance.
Our pipeline provides a general recipe for training flow-map models in the RAE latent space and can readily incorporate future advances, demonstrating both flexibility and extensibility.

{
\clearpage
\newpage
    \small
    \bibliographystyle{ieeenat_fullname}
    \bibliography{main}
}

\newpage
\appendix
\onecolumn

\section{Theoretical Analysis of the Trade-off Between Pretrained and One-Point Velocities for Oracle MeanFlow Learning}\label{app:theory-proof}

\begin{proposition}
For any $\lambda \in [0,1]$, consider the combination of the one-point estimator
and the pre-trained teacher velocity
\[
\rvw_\lambda := (1-\lambda)\hat{\rvv} + \lambda\rvv_{\bphi}.
\]
Plugging $\rvw_\lambda$ into the target
$\rvh^{\mathrm{tgt}}_{\btheta^-}(\rvz_t,t,s;\rvw)$ (with $\rvw = \rvw_\lambda$)
yields the corresponding loss $\mathcal{L}_{\mathrm{MF}}(\btheta;\rvw_\lambda)$. Consider the following three residuals: the one-point velocity residual, the teacher–oracle velocity residual, and the oracle bias.
\begin{align*}
\delta\hat{\rvv}_t &:= \hat{\rvv}(\rvz_t,t) - \rvv(\rvz_t,t),\\
\delta\rvv_t^\bphi &:= \rvv_{\bphi}(\rvz_t,t) - \rvv(\rvz_t,t),\\
\delta\rvh(\rvz_t,t,s) &:= \rvh_{\btheta^-}(\rvz_t,t,s) - \rvh(\rvz_t,t,s).
\end{align*}
Then the MF loss admits the following decomposition:
\begin{align*}
\begin{aligned}
    \mathcal{L}_{\mathrm{MF}}(\btheta;\rvw_\lambda)
    = \mathbb{E}_{t,\rvz_t}
       \Big\|\rvh_{\btheta}
       - \big(\rvh + \rmB + \lambda\,\rmA_{\btheta^-}\,\delta\rvv_t^\bphi\big)\Big\|^2 
    + (1-\lambda)^2\,\mathbb{E}\big\|\rmA_{\btheta^-}\,\delta\hat{\rvv}_t\big\|^2,
\end{aligned}
\end{align*}
where $\rmB(\rvz_t,t,s)
:= (t-s)\Big(\partial_t \delta\rvh
  + \big(\nabla_{\rvx}\delta\rvh\big)\rvv\Big)$ and  $\rmA_{\btheta^-}(\rvz_t,t,s)
:= \rmI - (t-s)\,\nabla_{\rvx}\rvh_{\btheta^-}$.
\end{proposition}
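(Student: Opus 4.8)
The plan is to expand the regression target $\rvh^{\mathrm{tgt}}_{\btheta^-}(\rvz_t,t,s;\rvw_\lambda)$ explicitly, substitute $\rvw_\lambda = (1-\lambda)\hat\rvv + \lambda\rvv_\bphi$, and then reorganize the resulting expression around the oracle mean flow $\rvh$ plus correction terms. First I would recall the definition
\[
\rvh^{\mathrm{tgt}}_{\btheta^-}(\rvz_t,t,s;\rvw)
= \rvw - (t-s)\bigl((\partial_\rvz \rvh_{\btheta^-})\rvw + \partial_t \rvh_{\btheta^-}\bigr)
= \bigl(\rmI - (t-s)\nabla_\rvx \rvh_{\btheta^-}\bigr)\rvw - (t-s)\,\partial_t \rvh_{\btheta^-},
\]
so that with the shorthand $\rmA_{\btheta^-} := \rmI - (t-s)\nabla_\rvx\rvh_{\btheta^-}$ we have $\rvh^{\mathrm{tgt}}_{\btheta^-}(\rvz_t,t,s;\rvw) = \rmA_{\btheta^-}\rvw - (t-s)\,\partial_t \rvh_{\btheta^-}$, which is \emph{affine} in $\rvw$. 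This linearity is the structural fact that makes everything work.

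Next I would establish the ``oracle identity'': when $\rvw = \rvv$ (the true velocity) and $\rvh_{\btheta^-}$ is replaced by the oracle $\rvh$, the target returns $\rvh$ exactly. Concretely, the MF identity (the $t$-derivative of $\rvh(\rvz_t,t,s) = \frac{1}{t-s}\int_s^t\rvv(\rvz_u,u)\diff u$ used to derive \Cref{eq:mf_loss}) gives $\rvh = \rvv - (t-s)\bigl((\nabla_\rvx\rvh)\rvv + \partial_t\rvh\bigr)$. Now I decompose the actual target around this: writing $\rvh_{\btheta^-} = \rvh + \delta\rvh$ and using $\rvw_\lambda = \rvv + \delta\rvv$ where $\delta\rvv := (1-\lambda)\delta\hat\rvv_t + \lambda\,\delta\rvv_t^\bphi$, and expanding $\rmA_{\btheta^-}\rvw_\lambda - (t-s)\partial_t\rvh_{\btheta^-}$, the terms involving only $\rvh$ and $\rvv$ collapse to $\rvh$ by the oracle identity; the terms linear in $\delta\rvh$ (with $\rvw=\rvv$) assemble into $-\rmB$ where $\rmB := (t-s)\bigl(\partial_t\delta\rvh + (\nabla_\rvx\delta\rvh)\rvv\bigr)$ — note the sign: we get $\rvh^{\mathrm{tgt}} = \rvh + \rmB + \rmA_{\btheta^-}\delta\rvv + (\text{cross term }(t-s)(\nabla_\rvx\delta\rvh)\delta\rvv)$; the cross term $(t-s)(\nabla_\rvx\delta\rvh)\delta\rvv$ must be either absorbed into the residual definitions or argued to be negligible/higher-order, and this bookkeeping is the one place to be careful. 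Assuming the linearization is treated as exact (or $\rmB$ is defined to swallow it), we arrive at
\[
\rvh^{\mathrm{tgt}}_{\btheta^-}(\rvz_t,t,s;\rvw_\lambda) = \rvh + \rmB + \rmA_{\btheta^-}\bigl((1-\lambda)\delta\hat\rvv_t + \lambda\,\delta\rvv_t^\bphi\bigr).
\]

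Finally I would plug this into $\mathcal{L}_{\mathrm{MF}}(\btheta;\rvw_\lambda) = \mathbb{E}\|\rvh_\btheta - \rvh^{\mathrm{tgt}}_{\btheta^-}\|^2$ and split off the $(1-\lambda)\rmA_{\btheta^-}\delta\hat\rvv_t$ piece. The key move is that $\hat\rvv(\rvz_t,t) = \alpha_t'\rvz_0 + \sigma_t'\beps$ is an \emph{unbiased} one-point estimator of $\rvv(\rvz_t,t) = \E[\alpha_t'\rvz_0 + \sigma_t'\beps\mid\rvz_t]$, so $\E[\delta\hat\rvv_t \mid \rvz_t] = \bm 0$. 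Writing $\rvh_\btheta - \rvh^{\mathrm{tgt}} = \bigl(\rvh_\btheta - (\rvh + \rmB + \lambda\rmA_{\btheta^-}\delta\rvv_t^\bphi)\bigr) - (1-\lambda)\rmA_{\btheta^-}\delta\hat\rvv_t$, expanding the square, and taking $\E[\,\cdot\mid\rvz_t]$, the cross term vanishes because everything multiplying $\delta\hat\rvv_t$ is $\rvz_t$-measurable (recall $\btheta^-$ is stop-gradient and $\rmB$ depends on $\rvh,\rvh_{\btheta^-}$, not on the random draw $(\rvz_0,\beps)$ beyond $\rvz_t$). Taking the outer expectation over $\rvz_t$ then yields exactly
\[
\mathcal{L}_{\mathrm{MF}}(\btheta;\rvw_\lambda) = \mathbb{E}_{t,\rvz_t}\bigl\|\rvh_\btheta - (\rvh + \rmB + \lambda\rmA_{\btheta^-}\delta\rvv_t^\bphi)\bigr\|^2 + (1-\lambda)^2\,\mathbb{E}\bigl\|\rmA_{\btheta^-}\delta\hat\rvv_t\bigr\|^2,
\]
as claimed. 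The main obstacle is the second paragraph's bookkeeping — correctly tracking signs when linearizing $\rvh^{\mathrm{tgt}}$ around the oracle and deciding how to handle (or justify discarding) the bilinear $(t-s)(\nabla_\rvx\delta\rvh)\delta\rvv$ term; the variance/cross-term argument in the last step is routine once unbiasedness and measurability of the stop-gradient quantities are invoked.
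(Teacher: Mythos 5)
Your plan is essentially the paper's proof: exploit that $\rvh^{\mathrm{tgt}}_{\btheta^-}(\cdot\,;\rvw)=\rmA_{\btheta^-}\rvw-(t-s)\,\partial_t\rvh_{\btheta^-}$ is affine in $\rvw$, evaluate it at $\rvw=\rvv$ to extract $\rvh$ plus the bias $\rmB$, shift by $\rmA_{\btheta^-}(\rvw_\lambda-\rvv)$, and kill the cross term in the expanded square via the conditional unbiasedness $\mathbb{E}[\delta\hat{\rvv}_t\mid\rvz_t]=\bm{0}$ together with $\rvz_t$-measurability of everything else under stop-gradient. That is the correct strategy, and it matches the paper's.

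Two bookkeeping remarks. First, the hedge about a bilinear $(t-s)(\nabla_{\rvx}\delta\rvh)\,\delta\rvv$ term being ``absorbed or argued negligible'' is unnecessary: there is no such leftover term and nothing is being linearized, so the decomposition is exact. That term would only appear if you further split $\rmA_{\btheta^-}=\rmI-(t-s)\nabla_{\rvx}\rvh_{\btheta^-}$ into an oracle factor $\rmI-(t-s)\nabla_{\rvx}\rvh$ plus a $\nabla_{\rvx}\delta\rvh$ correction, but the statement (and the paper's proof) keeps $\rmA_{\btheta^-}$ intact in $\rmA_{\btheta^-}\,\delta\rvv_t^\bphi$. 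The clean route is to observe that $\rvh^{\mathrm{tgt}}_{\btheta^-}(\cdot\,;\rvw_\lambda)-\rvh^{\mathrm{tgt}}_{\btheta^-}(\cdot\,;\rvv)=\rmA_{\btheta^-}(\rvw_\lambda-\rvv)$ holds identically by linearity, and then compute $\rvh^{\mathrm{tgt}}_{\btheta^-}(\cdot\,;\rvv)$ separately by subtracting the MF identity. Second, your sign worry was pointing at a genuine (though cosmetic) slip: subtracting $\rvh=\rvv-(t-s)\bigl(\partial_t\rvh+(\nabla_{\rvx}\rvh)\rvv\bigr)$ from the target at $\rvw=\rvv$ gives $\rvh^{\mathrm{tgt}}_{\btheta^-}(\cdot\,;\rvv)=\rvh-\rmB$ under the stated definition of $\rmB$, not $\rvh+\rmB$; the paper's appendix asserts the latter without rederiving it. The sign on $\rmB$ does not affect the bias--variance interpretation, but you were right to notice rather than silently reconcile with the statement.
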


\begin{proof}
Throughout the proof we fix $s$ and suppress the dependence on $(\rvz_t,t,s)$
whenever there is no ambiguity. All expectations are taken over $(t,\rvz_t)$ and when $\hat{\rvv}$ is random (e.g.\ a Monte--Carlo estimator).

We first derive the affine form of the teacher target around the oracle velocity
$\rvv$. Recall from the MF construction that for any proxy velocity $\rvw$,
the target associated with $\rvh_{\btheta^-}$ is
\[
\rvh_{\btheta^-}^{\mathrm{tgt}}(\rvz_t,t,s;\rvw)
=
\rvw - (t-s)\Bigl((\nabla_{\rvx}\rvh_{\btheta^-})\,\rvw
                  + \partial_t\rvh_{\btheta^-}\Bigr),
\]
so for two velocities $\rvw$ and $\rvv$ we have
\begin{align*}
\rvh_{\btheta^-}^{\mathrm{tgt}}(\rvz_t,t,s;\rvw)
 - \rvh_{\btheta^-}^{\mathrm{tgt}}(\rvz_t,t,s;\rvv)
&= \Bigl[\rvw - (t-s)(\nabla_{\rvx}\rvh_{\btheta^-})\,\rvw\Bigr]
 - \Bigl[\rvv - (t-s)(\nabla_{\rvx}\rvh_{\btheta^-})\,\rvv\Bigr] \\
&= \bigl(\rmI - (t-s)\,\nabla_{\rvx}\rvh_{\btheta^-}\bigr)\,(\rvw - \rvv).
\end{align*}
By the definition of $\rmA_{\btheta^-}$ in the proposition,
$\rmA_{\btheta^-} := \rmI - (t-s)\,\nabla_{\rvx}\rvh_{\btheta^-}$, so this can
be written as
\[
\rvh_{\btheta^-}^{\mathrm{tgt}}(\rvz_t,t,s;\rvw)
=
\rvh_{\btheta^-}^{\mathrm{tgt}}(\rvz_t,t,s;\rvv)
+ \rmA_{\btheta^-}(\rvz_t,t,s)\,\bigl(\rvw - \rvv(\rvz_t,t)\bigr).
\]
On the other hand, by the PDE relation between the oracle flow map $\rvh$ and
the oracle velocity $\rvv$, and by the definition
$\delta\rvh = \rvh_{\btheta^-} - \rvh$ and
$\rmB(\rvz_t,t,s)
:= (t-s)\bigl(\partial_t\delta\rvh + (\nabla_{\rvx}\delta\rvh)\,\rvv\bigr)$,
we have (see the derivation in the main text)
\[
\rvh_{\btheta^-}^{\mathrm{tgt}}(\rvz_t,t,s;\rvv)
= \rvh(\rvz_t,t,s) + \rmB(\rvz_t,t,s).
\]
Combining the last two displays gives, for every proxy velocity $\rvw$,
\begin{equation}
\label{eq:affine-target}
\rvh_{\btheta^-}^{\mathrm{tgt}}(\rvz_t,t,s;\rvw)
=
\rvh(\rvz_t,t,s)
+ \rmB(\rvz_t,t,s)
+ \rmA_{\btheta^-}(\rvz_t,t,s)\,\bigl(\rvw - \rvv(\rvz_t,t)\bigr),
\end{equation}
which is exactly the affine reparametrization of the target around the oracle
velocity $\rvv$.

We now prove the claimed decomposition of the MF loss.
By the definitions of the residual velocities,
\[
\hat{\rvv} = \rvv + \delta\hat{\rvv}_t,
\qquad
\rvv_{\bphi} = \rvv + \delta\rvv_t^\bphi,
\]
the mixed velocity satisfies, for any $\lambda\in[0,1]$,
\begin{align}
\label{eq:w-lambda-residual}
\rvw_\lambda
  = (1-\lambda)\hat{\rvv} + \lambda\rvv_{\bphi} 
  = (1-\lambda)\bigl(\rvv + \delta\hat{\rvv}_t\bigr)
     + \lambda\bigl(\rvv + \delta\rvv_t^\bphi\bigr) 
  = \rvv + (1-\lambda)\delta\hat{\rvv}_t + \lambda\delta\rvv_t^\bphi.
\end{align}
Substituting $\rvw = \rvw_\lambda$ and \Cref{eq:w-lambda-residual} into
\Cref{eq:affine-target} yields
\begin{align}
\rvh_{\btheta^-}^{\mathrm{tgt}}(\rvz_t,t,s;\rvw_\lambda)
&=
\rvh + \rmB
+ \rmA_{\btheta^-}\bigl(\rvw_\lambda - \rvv\bigr) \notag\\
&=
\rvh + \rmB
+ \rmA_{\btheta^-}\bigl((1-\lambda)\delta\hat{\rvv}_t
                       + \lambda\delta\rvv_t^\bphi\bigr) \notag\\
&=
\rvh(\rvz_t,t,s) + \rmB(\rvz_t,t,s)
+ \lambda\,\rmA_{\btheta^-}(\rvz_t,t,s)\,\delta\rvv_t^\bphi
+ (1-\lambda)\,\rmA_{\btheta^-}(\rvz_t,t,s)\,\delta\hat{\rvv}_t.
\label{eq:h-tgt-wlambda-simple}
\end{align}
For the remainder of the proof we fix $(t,\rvz_t)$ and abbreviate
\[
\rvh_{\btheta} := \rvh_{\btheta}(\rvz_t,t,s),
\quad
\rvh := \rvh(\rvz_t,t,s),
\quad
\rmA_{\btheta^-} := \rmA_{\btheta^-}(\rvz_t,t,s),
\quad
\rmB := \rmB(\rvz_t,t,s).
\]

By \Cref{eq:general_mf_loss}, the MF loss at $\rvw_\lambda$ is
\[
\mathcal{L}_{\mathrm{MF}}(\btheta;\rvw_\lambda)
  = \mathbb{E}\Bigl\|
      \rvh_{\btheta}(\rvz_t,t,s)
      - \rvh_{\btheta^-}^{\mathrm{tgt}}(\rvz_t,t,s;\rvw_\lambda)
    \Bigr\|_2^2.
\]
Using \Cref{eq:h-tgt-wlambda-simple}, we can write the pointwise residual as
\[
\rvh_{\btheta} - \rvh_{\btheta^-}^{\mathrm{tgt}}(\rvz_t,t,s;\rvw_\lambda)
=
\Bigl[\rvh_{\btheta}
      - \bigl(\rvh + \rmB + \lambda\,\rmA_{\btheta^-}\delta\rvv_t^\bphi\bigr)
\Bigr]
- (1-\lambda)\,\rmA_{\btheta^-}\delta\hat{\rvv}_t.
\]
Let
\[
\mathbf{Y}
:=
\rvh_{\btheta}
- \bigl(\rvh + \rmB + \lambda\,\rmA_{\btheta^-}\delta\rvv_t^\bphi\bigr),
\]
so that
\[
\rvh_{\btheta} - \rvh_{\btheta^-}^{\mathrm{tgt}}(\rvz_t,t,s;\rvw_\lambda)
= \mathbf{Y} - (1-\lambda)\,\rmA_{\btheta^-}\delta\hat{\rvv}_t.
\]
Therefore
\begin{align}
\mathcal{L}_{\mathrm{MF}}(\btheta;\rvw_\lambda)
=
\mathbb{E}\bigl\|
  \mathbf{Y} - (1-\lambda)\,\rmA_{\btheta^-}\delta\hat{\rvv}_t
\bigr\|_2^2 =
\mathbb{E}\|\mathbf{Y}\|_2^2
- 2(1-\lambda)\,\mathbb{E}\bigl\langle
    \mathbf{Y},\rmA_{\btheta^-}\delta\hat{\rvv}_t
  \bigr\rangle
+ (1-\lambda)^2\,\mathbb{E}\bigl\|
    \rmA_{\btheta^-}\delta\hat{\rvv}_t
  \bigr\|_2^2.
\label{eq:LMF-expanded-simple}
\end{align}

Since the one-point estimator is conditionally unbiased:
\[
\mathbb{E}\bigl[\hat{\rvv}(\rvz_t,t) \,\big|\, \rvz_t\bigr]
= \rvv(\rvz_t,t),
\qquad\text{so}\qquad
\mathbb{E}\bigl[\delta\hat{\rvv}_t \,\big|\, \rvz_t\bigr] = 0.
\]
For fixed $(t,\rvz_t)$, the quantities $\rvh_{\btheta}$, $\rvh$, $\rmB$,
$\rvv$, $\rvv_{\bphi}$, hence $\mathbf{Y}$ and $\rmA_{\btheta^-}$, are
deterministic, and the only randomness comes from the internal noise of
$\hat{\rvv}$. Thus
\begin{align*}
\mathbb{E}\bigl\langle
  \mathbf{Y},\rmA_{\btheta^-}\delta\hat{\rvv}_t
\bigr\rangle
&=
\mathbb{E}_{t,\rvz_t}\Bigl[
  \mathbb{E}\bigl[
    \langle \mathbf{Y},\rmA_{\btheta^-}\delta\hat{\rvv}_t\rangle
     \big| 
    \rvz_t
  \bigr]
\Bigr] \\
&=
\mathbb{E}_{t,\rvz_t}\Bigl[
  \bigl\langle
    \mathbf{Y}, 
    \rmA_{\btheta^-}\,\mathbb{E}[\delta\hat{\rvv}_t |\rvz_t]
  \bigr\rangle
\Bigr] \\
&=
\mathbb{E}_{t,\rvz_t}\bigl\langle
  \mathbf{Y},\rmA_{\btheta^-}\cdot \bm{0}
\bigr\rangle
= \bm{0}.
\end{align*}
Hence the cross term in \Cref{eq:LMF-expanded-simple} is zero, and we obtain
\begin{align*}
\mathcal{L}_{\mathrm{MF}}(\btheta;\rvw_\lambda)
=
\mathbb{E}_{t,\rvz_t}\Bigl\|
  \rvh_{\btheta}(\rvz_t,t,s)
  - \bigl(
      \rvh(\rvz_t,t,s)
      + \rmB(\rvz_t,t,s)
      + \lambda\,\rmA_{\btheta^-}(\rvz_t,t,s)\,\delta\rvv_t^\bphi
    \bigr)
\Bigr\|_2^2 
+ (1-\lambda)^2\,
  \mathbb{E}\bigl\|
    \rmA_{\btheta^-}(\rvz_t,t,s)\,\delta\hat{\rvv}_t
  \bigr\|_2^2,
\end{align*}
which is exactly the claimed decomposition \Cref{eq:LMF-mix}. This completes
the proof.
\end{proof}

From Proposition~\ref{prop:bias-var}, several implications follow. First, using the pre-trained velocity $\rvv_{\bphi}$ improves stability: when
$\lambda = 1$, the noisy one-point term vanishes, which reduces gradient
variance and typically stabilizes and accelerates optimization.

Second, this comes at the cost of bias.
The model no longer regresses to $\rvh$, but to
$\rvh + \rmB + \lambda\,\rmA_{\btheta^-}\,\delta\rvv_s^\bphi$.
When $\rvv_{\bphi}$ is accurate (small $\delta\rvv_s^\bphi$) and the time step
is small so that $\rmA_{\btheta^-} \approx \rmI$, this additional bias is small.
In the small-step regime, the shift is approximately
$\lambda\,\delta\rvv_s^\bphi$, so the practical target is close to the oracle
one whenever $\|\delta\rvv_s^\bphi\|$ is small.
However, under domain shift, $\delta\rvv_s^\bphi$ can be non-negligible and the
learned $\rvh_{\btheta}$ will partially compensate for this mismatch.

Third, the mixing weight $\lambda$ offers a natural tuning mechanism.
A schedule with $\lambda$ gradually increasing to $1$ can start from a regime
with low bias and low variance: initially, the one-point label maintains an
unbiased target while the pre-trained signal reduces variance; later, larger
$\lambda$ leverages the smoothness of $\rvv_{\bphi}$ for stable convergence. In practice, we use a simple two-stage schedule:
$\lambda = 0$ in the distillation stage and $\lambda = 1$ in the (optional) bootstrapping stage.

Finally, the coupling to the self-teacher remains simple.
Both $\rmB$ and $\rmA_{\btheta^-}$ are treated as stop-gradient with respect to
$\btheta^-$, so updates in $\btheta$ still solve a least-squares fit to a
shifted target.
As the self-teacher improves over training, $\delta\rvh$ shrinks and $\rmB$
decreases, further reducing the induced bias.

\section{More Hyperparameter Settings}\label{appendix:hyperparams}
\paragraph{MF-RAE's Flow Matching Pre-Training Setup.} We directly use the flow matching model checkpoint provided by RAE \cite{zheng2025rae}, where the model on ImageNet~256 is trained for 800 epochs, and that on ImageNet~512 is trained for 400 epochs.

\paragraph{MF-RAE's CMT Setup.} 
The CMT stage's learning rate is also 1e-4, but the EMA $\beta$ is set to 0.999 for faster convergence, thanks to CMT's stability and fixed target.


\textbf{MF-RAE's CMT Setup on ImageNet~256.} 
CMT generates eight trajectories from the teacher DiT$^\text{DH}$-XL without guidance during each iteration. We use the Euler ODE solver with 16 steps (FID=2.3). For each trajectory, it generates $16\times 15/2$ =120 pairs, but we randomly select 64 pairs for optimization to fit on the H100 GPU. The total batch size is thus $64\times 8=512$ pairs. We conduct this training on 8 H100 GPUs for 27K iterations. 


\textbf{MF-RAE's CMT Setup on ImageNet~512.} CMT generates two trajectories from the teacher DiT$^\text{DH}$-XL without guidance during each iteration. We use the Euler ODE solver with 16 steps (FID=1.66 without any guidance). For each trajectory, we randomly select 8 pairs for optimization to fit on the H100 GPU. The total batch size is thus $8\times2=16$ pairs. We accumulate gradients for eight rounds to enlarge the batch size. We conduct this training on 2 H100 GPUs for 27K iterations. 

\section{Generated Samples}
\begin{figure}[hbtp]
\centering
\includegraphics[width=0.89\linewidth]{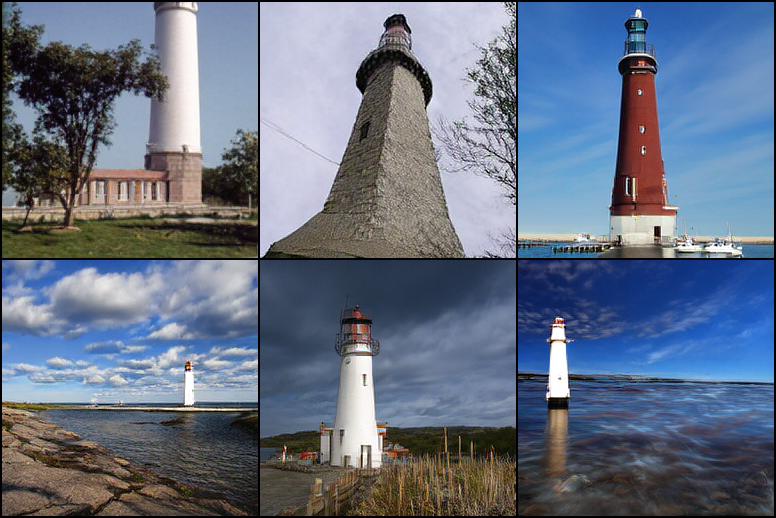}
\caption{ImageNet~256 MF-RAE 1-step samples on class 437: beacon, lighthouse, beacon light, pharos.}
\end{figure}
\begin{figure}[hbtp]
\centering
\includegraphics[width=0.89\linewidth]{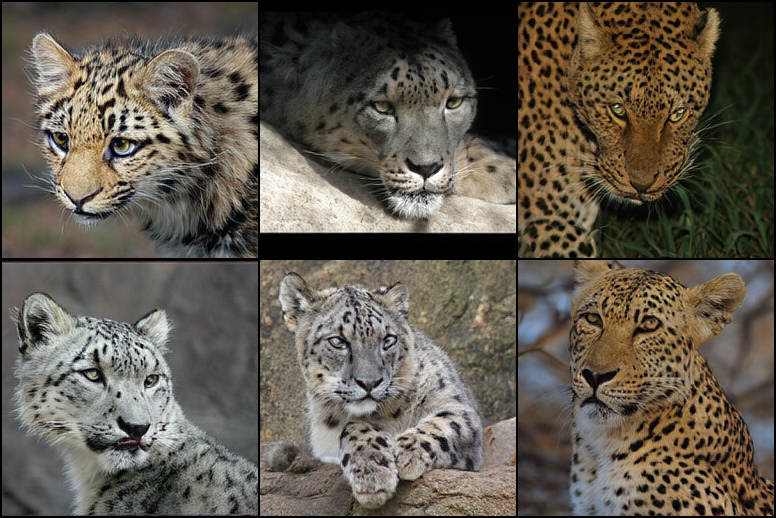}
\caption{ImageNet~256 MF-RAE 1-step samples on classes 288 and 290: leopard and snow leopard.}
\end{figure}
\begin{figure}[hbtp]
\centering
\includegraphics[width=0.89\linewidth]{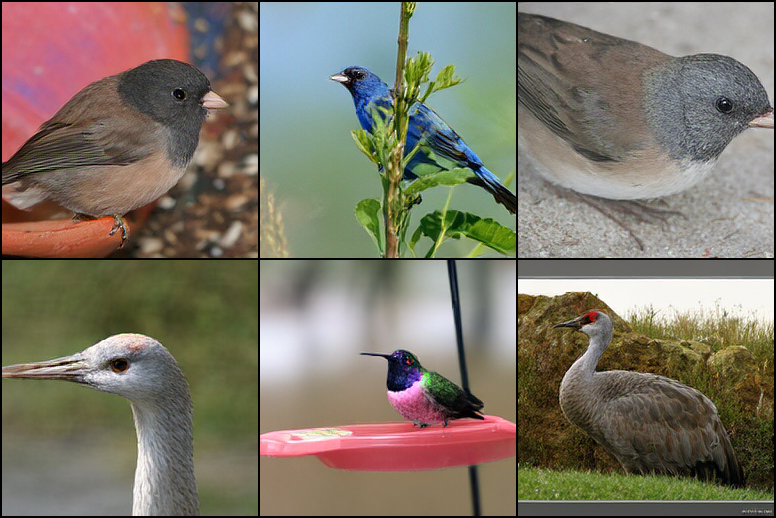}
\caption{ImageNet~256 MF-RAE 1-step samples on classes 13, 14, 94, and 134: snowbird, indigo bird, hummingbird, and crane bird.}
\end{figure}
\begin{figure}[hbtp]
\centering
\includegraphics[width=0.89\linewidth]{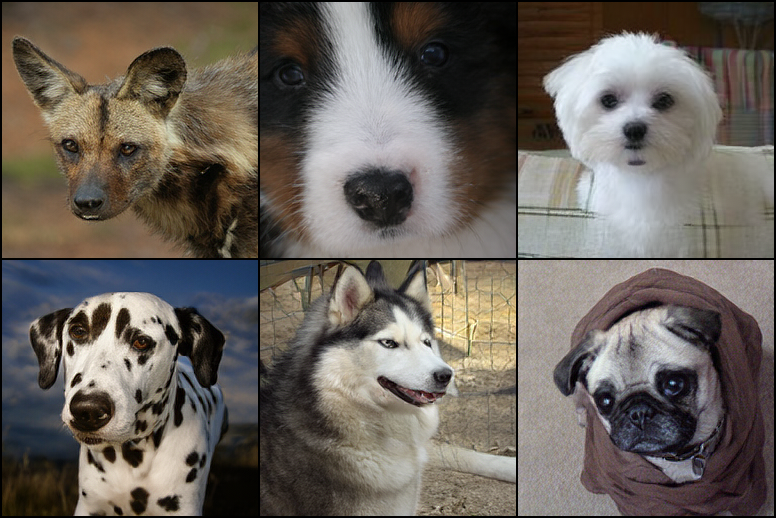}
\caption{ImageNet~256 MF-RAE 1-step samples for various dogs.}
\end{figure}
\begin{figure}[hbtp]
\centering
\includegraphics[width=0.89\linewidth]{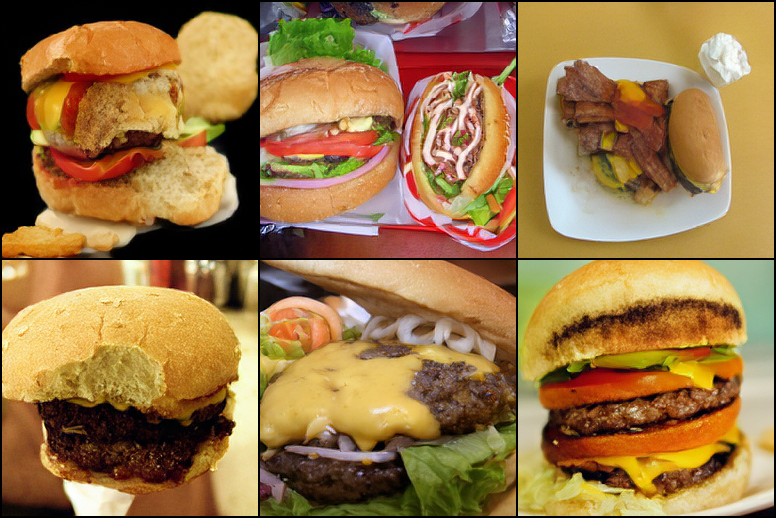}
\caption{ImageNet~256 MF-RAE 1-step samples for class 933: cheeseburger.}
\end{figure}
\begin{figure}[hbtp]
\centering
\includegraphics[width=0.89\linewidth]{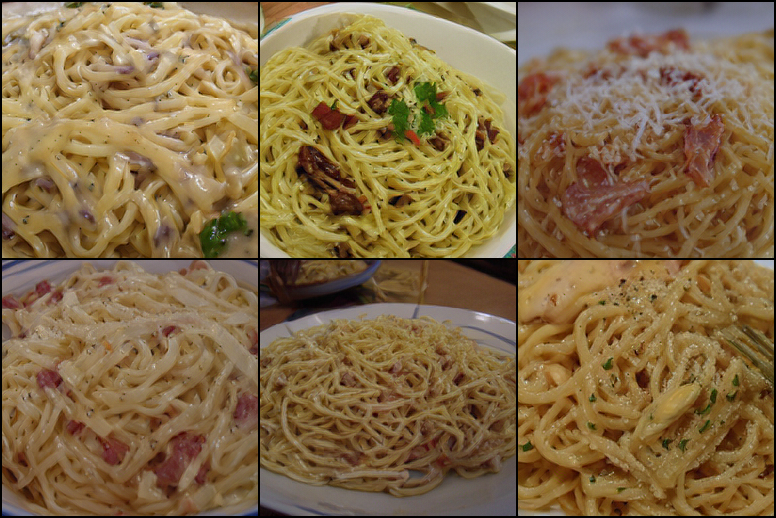}
\caption{ImageNet~256 MF-RAE 1-step samples for class 959: carbonara.}
\end{figure}
\begin{figure}[hbtp]
\centering
\includegraphics[width=0.89\linewidth]{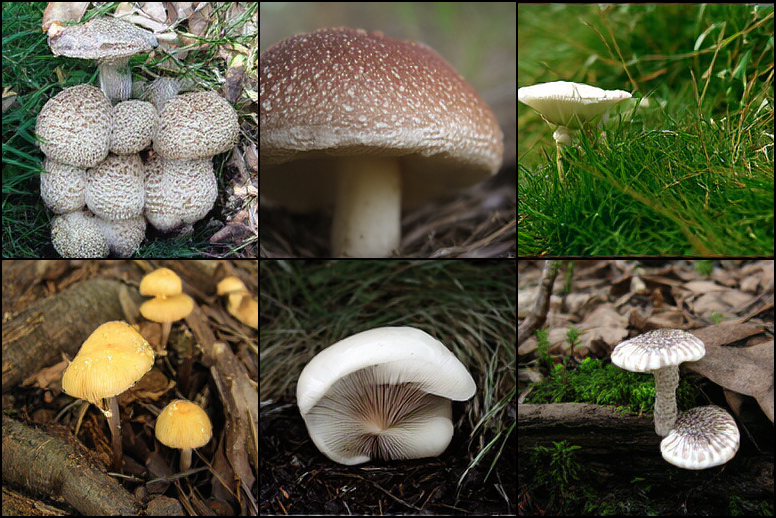}
\caption{ImageNet~256 MF-RAE 1-step samples for class 947: mushroom.}
\end{figure}
\end{document}